\title{Path Learning with\\Trajectory Advantage Regression}
\author{
    Kohei Miyaguchi\\
    \texttt{koheimiyaguchi@gmail.com}
}
\begin{document}

\maketitle
\begin{abstract}
    In this paper,
    we propose trajectory advantage regression,
    a method of offline path learning and path attribution based on reinforcement learning.
    The proposed method can be used to solve path optimization problems
    while algorithmically only solving a regression problem.
\end{abstract}


\section{Introduction}

We are concerned with the problem of path learning~(PL) in an offline fashion.
The goal of PL is to find the path $\psi$ maximizing the yield $J(\psi)$,
whereas
the feasible set of paths $\Psi$
and
the shape of the yield function $J:\Psi\to \doubleR$
are both (partially or entirely) unknown,
hence to be estimated from fixed observational data collected in advance (i.e., the \emph{offline setting}).

To address this problem, we propose to frame the offline PL problem
as a special sub-problem of the offline reinforcement learning~(RL) and derived a novel algorithm to solve it.
This algorithm allows us to find the optimal path in $\Psi$ efficiently
and also gives a new path-scoring method useful for explaining the (sub-) optimality of paths in terms of the path elements.

The rest of the paper is organized as follows.
We start with introducing preliminary facts and formulations in \cref{sec:preliminary}.
Then, we show the reducibility of PL to RL in \cref{sec:pl2rl}, which is the key to our method presented in \cref{sec:method}.
Finally, we conclude the paper discussion the related work in \cref{sec:related_work} and summarizing the findings and future directions in \cref{sec:conclusion}.

\section{Preliminaries}
\label{sec:preliminary}
$\Delta(\Xcal)$ denotes the set of probability measures on $\Xcal$.
$\delta_x$ denotes the Dirac's point measure at $x$.
$\Xcal^*\coloneqq \bigcup_{T\ge 0}\Xcal^T$ is the set of sequences $(x_1,\ldots,x_T)\in\Xcal^T$ with variable length $T\ge 0$.
In particular, let $\emptyset\in\Xcal^*$ denote the empty sequence with length zero.
If a mapping $f:\Xcal\to\Delta(\Ycal)$ is deterministic,
i.e., there exists 
$\fhat:\Xcal\to\Ycal$ such that $f(x)=\delta_{\fhat(x)}$ for all $x\in\Xcal$,
we may slightly abuse the notation and identify $f$ as $\fhat$.

\paragraph{Offline path learning (PL).}
Let $\Acal$ be a finite set of \emph{actions} with which we assemble a \emph{path},\footnote{
    $a\in\Acal$ can be seen as an edge of a graph.
    However, we refer to it as an action to establish connection with RL later.
}
or a sequence of actions, $\psi=a^T=(a_1,a_2,\ldots,a_T)\in\Acal^*$.
Without loss of generality, we assume every path $a^T$ of interest is ended with the special action $a_T=\bot\in\Acal$ that marks the end of the sequence.
We refer to such $\bot$-terminated sequences as \emph{complete sequences},
and to their prefixes as \emph{proper sequences}.
Let $J_\Psi(\psi)\in[0,1]$ be the \emph{yield} of each path $\psi$
and $\Psi\subset \Acal^*$ be a finite feasible set of complete sequences,
where $J_\Psi(\psi)=0$ if $\psi\not\in\Psi$.

The goal of the offline path learning (PL) is 
to maximize the yield $J_\Psi(\psi)$
given noisy path-yield pairs $Z\coloneqq \cbr{(\psi_i,y_i)}_{i=1}^n$,
while not knowing the feasible set $\Psi$, the yield function $J_\Psi(\cdot)$,
nor the distribution of $(\psi_i, y_i)$ other than $\doubleE\sbr{y_i|\psi_i}=J_\Psi(\psi_i)$.
For simplicity, we assume each pair $(\psi_i, y_i)$ is independent of one another and denote the underlying distributions of $\psi_i$ and $y_i|\psi_i$ by
$P_\Psi\in\Delta(\Psi)$ and $P_Y:\Psi\to \Delta([0, 1])$.
Thus, an instance of offline PL is identified by the tuple $\Pcal\equiv (\Acal, \Psi, P_\Psi, P_Y)$.

\paragraph{MDP and RL.}
A Markov decision process~(MDP)~\citep{puterman2014markov,sutton1998reinforcement} $\Mcal\equiv (\Scal, \Acal, P_1, P_r, P_+)$ consists of 
the state space $\Scal$,
the action space $\Acal$ (intentionally using the same action set as PL),
the initial state distribution $P_1\in\Delta(\Scal)$,
the reward distribution $P_r:\Scal\times \Acal\to \Delta([0,1])$, and
the transition kernel $P_+:\Scal\times \Acal\to \Delta(\Scal)$.
With a policy $\pi:\Scal\to\Delta(\Acal)$,
an MDP $\Mcal$ generates \emph{episodes} $\xi=\rbr{(s_t,a_t,r_t)}_{t=1}^\infty \sim P_\xi(\Mcal,\pi)$,
where
$s_1\sim P_1$, $a_t\sim \pi(s_t)$, 
$s_{t+1}\sim P_+(s_t,a_t)$ and
$r_t\sim P_r(s_t,a_t)$
for $t\ge 1$.

The goal of the reinforcement learning~(RL) is to find a policy $\pi$
that maximizes the total expected reward (or the \emph{policy value})
\begin{align}
    J_\Pi(\pi)\coloneqq \doubleE^\pi\sbr{\sum_{t=1}^\infty r_t},
    \label{eq:policy_value}
\end{align}
where the expectation $\doubleE^{\pi}$ is taken with respect to the episode $\rbr{(s_t,a_t,r_t)}_{t=1}^T\sim P_\xi(\Mcal,\pi)$.
Here, we assume the expectation in \cref{eq:policy_value} is well-defined.
An instance of the reinforcement learning~(RL) is identified
with the MDP $\Mcal$.

\paragraph{Offline RL.}
The offline RL~\citep{levine2020offline} is a special class of RL,
where we cannot access $\Mcal$ except through
a fixed dataset $\Xi\coloneqq \cbr{(s_i,a_i,r_i,s'_i)}_{i=1}^n$,
where $(s_i,a_i)\sim \mu\in \Delta(\Scal\times \Acal)$ is a marginal state-action distribution
and $r_i\sim P_r(s_i,a_i)$, $s'_i\sim P_+(s_i,a_i)$.
We identify an instance of the offline RL by the tuple $\Rcal\equiv (\Scal, \Acal, P_1, P_r, P_+, \mu)$.

\paragraph{Value functions and optimal policy.}
The state value function~(or V-function) $V^\pi:\Scal\to \doubleR$
of policy $\pi\in\Pi\coloneqq \Delta(\Acal)^\Scal$
is defined as
the conditional total expected reward
\begin{align*}
    V^\pi(s)\coloneqq \doubleE^\pi\sbr{\sum_{t=1}^\infty r_t \middle | s_1=s},
\end{align*}
whereas the action value function~(or Q-function) is given by
\begin{align*}
    Q^\pi(s,a)\coloneqq \Tcal V^\pi (s,a),
\end{align*}
with $\Tcal:\doubleR^{\Scal}\to\doubleR^{\Scal\times\Acal}$ being the transition operator such that
$$\Tcal f(s,a)=\doubleE_{r\sim P_r(s,a),s'\sim P_+(s,a)}\sbr{r+f(s')}.$$
We refer to the pointwise maximums of V- and Q- functions with respect to policy $\pi\in\Pi$ as the \emph{optimal V- and Q- functions},
$V^*(s)\coloneqq \max_{\pi\in\Pi}V^\pi(s)$, $Q^*(s,a)\coloneqq max_{\pi\in\Pi}Q^\pi(s,a)$.
The \emph{(optimal) advantage function} is then given by $A^*(s,a)=Q^*(s,a)-V^*(s)~(\le 0)$,
quantifying the goodness of action $a$ at state $s$ compared to the best policy in $\Pi$.

It has been known that there exists a deterministic optimal policy $\pi^*:\Scal\to \Delta(\Acal)$,
such that $V^{\pi^*}(s)=V^*(s)$ and $Q^{\pi^*}(s,a)=Q^*(s,a)$~(Theorem~7.1.9 in \cite{puterman2014markov}).

\paragraph{Linear programming for the optimal V-function.}
The optimal V-function can be characterized as the solution of a linear programming called V-LP~(\cite{puterman2014markov}, Section 7.2.7),
\begin{subequations}
\label{eq:vlp}
\begin{align}
    &\min \doubleE_{s\sim P_0}\sbr{V(s)}
    \\
    &\quad \text{s.t.}\quad \Tcal V(s,a)\le V(s),\quad s\in\Scal,a\in\Acal,
    \\
    &\quad \phantom{\text{s.t.}}\quad V(s)\ge 0,\quad s\in\Scal.
    \label{eq:vlp:constraint}
\end{align}
\end{subequations}
Here, $P_0\in\Delta(\Scal)$ is a fixed state distribution covering the entire state space $\mathcal S$.
\cref{eq:vlp} can be approximated
with the penalty method,
turned into the minimization of 
\begin{align}
    \Lcal(V)&\coloneqq \doubleE_{s\sim P_0}\sbr{V(s)} + \lambda \doubleE_{(s,a)\sim P_1}\sbr{\{\Tcal V(s,a) - V(s)\}_+^2}, &V\ge 0,
    \label{eq:penalized_vlp}
\end{align}
where $\lambda\ge 0$ is the penalty coefficient,
$P_1\in\Delta(\Scal\times \Acal)$ is a fixed distribution supported on the entire state-action space,
and $\cbrinline{x}_+\coloneqq \max(0, x)$.


\section{PL-to-RL Reduction}
\label{sec:pl2rl}
In this section, we show that the offline PL can be reduced to the offline RL
by constructing a mapping from offline-PL instances $\Pcal$
to offline-RL instances $\Rcal$.

Take the state space as the path space, $\Scal=\Acal^*$.
The initial state is then naturally given as the empty sequence, $P_1=P_1^\oplus\coloneqq\delta_{\emptyset}$,
and the state transition is given by appending actions one by one,
\begin{align*}
    P_+^\oplus(s,a)\coloneqq
    s\oplus a.
\end{align*}
Moreover, take the following reward distribution associated with the noisy yield distribution $P_Y$,
\begin{align*}
    P_r^Y(s,a)&\coloneqq
    \left\{\begin{array}{cl}
            P_Y(s) & (s\in\Psi)\\
            \delta_0 & (\text{otherwise})
    \end{array}\right..
\end{align*}%
Finally, take the marginal distribution as
\begin{align}
    \mu^\Psi(s,a)=P_\Psi(s)\cdot U_A(a),
    \label{eq:PLRL_marginal}
\end{align}
where $U_A\in\Delta(\Acal)$ is the uniform distribution over $\Acal$.


Now, let
\begin{align}
    \Rcal_\Pcal\coloneqq (\Acal^*, \Acal,P_1^\oplus,P_r^Y,P_+^\oplus,\mu^\Psi).
    \label{eq:reduction}
\end{align}
be the offline-RL instance constructed based on $\Pcal=(\Acal,\Psi,P_\Psi,P_Y)$ in the aforementioned manner.
Then, an offline dataset of $\Rcal_\Pcal$ is also constructed with that of the original offline PL problem,
$\Xi\coloneqq \{(\psi_i,a_i,y_i,\psi_i\oplus a_i,1)\}_{i=1}^n$, $a_i\sim U_A$.
Moreover,
solutions of $\Rcal_\Pcal$ are equivalent to those of the original instance $\Pcal$.

\begin{proposition}
\label{prop:reduction}
    Let $\pi^*$ be an optimal policy of $\Rcal_\Pcal$.
    Then, $\psi^{\pi^*}$ is an optimal path for offline-PL instance $\Pcal$.
\end{proposition}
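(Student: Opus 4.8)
The plan is to show that, for a deterministic policy, the policy value in $\Rcal_\Pcal$ coincides exactly with the yield of the single path the policy traces out from the empty sequence, and then to invoke the existence of a deterministic optimal policy. First I would recall (Theorem~7.1.9 of \cite{puterman2014markov}) that $\Rcal_\Pcal$ admits a deterministic optimal policy, so we may take $\pi^*$ deterministic; then $\psi^{\pi^*}$ is well defined as the action sequence obtained by rolling $\pi^*$ out from $s_1=\emptyset$ via $s_{t+1}=s_t\oplus\pi^*(s_t)$. I would also note that, because $P_1^\oplus=\delta_\emptyset$, the policy value reduces to $J_\Pi(\pi)=\doubleE^\pi[\sum_t r_t]=V^\pi(\emptyset)$, so that a policy which is pointwise optimal in the sense of $V^*$ in particular maximizes $J_\Pi$.

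The core step is the identity $J_\Pi(\pi)=J_\Psi(\psi^\pi)$ for every deterministic $\pi$. Since both the transition $P_+^\oplus$ and $\pi$ are deterministic, the state at time $t$ is the deterministic prefix $s_t^\pi$ of $\psi^\pi$ of length $t-1$, and these states form a nested chain $\emptyset=s_1^\pi\subset s_2^\pi\subset\cdots$. Using $\doubleE[y_i\mid\psi_i]=J_\Psi(\psi_i)$ together with the definition of $P_r^Y$ and the convention $J_\Psi(s)=0$ for $s\notin\Psi$, the expected per-step reward is $\doubleE^\pi[r_t\mid s_t^\pi]=J_\Psi(s_t^\pi)$, whence $J_\Pi(\pi)=\sum_t J_\Psi(s_t^\pi)$. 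It then remains to collapse this sum to a single term. Here I would use the structural fact that $\bot$ marks, and occurs only at, the end of a sequence: the proper prefixes of $\psi^\pi$ contain no $\bot$ and are therefore incomplete, hence outside $\Psi$, while any state obtained after a $\bot$ has been appended contains an interior $\bot$ and is likewise outside $\Psi$. Consequently at most one prefix $s_t^\pi$ lies in $\Psi$, namely $\psi^\pi$ itself when $\pi$ completes a feasible path, giving $\sum_t J_\Psi(s_t^\pi)=J_\Psi(\psi^\pi)$; if $\pi$ never emits a complete feasible path then every term vanishes and $\psi^\pi\notin\Psi$, so the identity still holds with both sides zero.

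To finish, I would establish that the rollout map $\pi\mapsto\psi^\pi$ covers all of $\Psi$: for each $\psi=(a_1,\dots,a_{T-1},\bot)\in\Psi$ the deterministic policy that plays $a_t$ on the length-$(t-1)$ prefix of $\psi$ satisfies $\psi^\pi=\psi$. Combining this with the identity above yields
\begin{align*}
    \max_{\pi}J_\Pi(\pi)=\max_{\pi\text{ det.}}J_\Psi(\psi^\pi)=\max_{\psi\in\Psi}J_\Psi(\psi),
\end{align*}
and since the deterministic $\pi^*$ attains the left-hand maximum, $\psi^{\pi^*}$ attains $\max_{\psi\in\Psi}J_\Psi(\psi)$ and is therefore an optimal path for $\Pcal$. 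I expect the reward accounting in the middle paragraph to be the main obstacle: one must argue carefully that the infinite-horizon sum does not leak reward by re-entering $\Psi$ after a $\bot$, and that policies failing to emit a complete feasible path contribute zero on both sides, so that the well-definedness of $\sum_t r_t$ assumed for $\Rcal_\Pcal$ is respected throughout.
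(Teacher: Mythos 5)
The paper states \cref{prop:reduction} without proof---it is asserted immediately after the construction \eqref{eq:reduction}---so there is no in-paper argument to compare against; judged on its own, your proof is correct and supplies exactly what is missing. Your three ingredients are the right ones: (i) under the convention $J_\Psi\equiv 0$ outside $\Psi$, the expected per-step reward at state $s$ is $J_\Psi(s)$; (ii) since $\bot$ occurs only at the terminal position of a complete sequence, a rollout visits at most one state of $\Psi$, so $\sum_t r_t$ has at most one nonzero term (which also settles the well-definedness of the policy value) and $J_\Pi(\pi)=J_\Psi(\psi^\pi)$ for deterministic $\pi$; (iii) the rollout map $\pi\mapsto\psi^\pi$ covers $\Psi$, whence $\max_\pi J_\Pi(\pi)=\max_{\psi\in\Psi}J_\Psi(\psi)$. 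These are the same structural facts that implicitly underlie the paper's (also unproven) interpretation identities \cref{eq:value_interpretation,eq:advantage_interpretation} and its \cref{lem:advantage_decomposition}, so your route is the intended one rather than a genuinely different one. One small mismatch: the proposition quantifies over an arbitrary optimal policy $\pi^*$, while you restrict to deterministic ones by invoking the deterministic-optimal-policy theorem the paper cites. The restriction is harmless, but note that your core identity extends verbatim to stochastic $\pi$ as $J_\Pi(\pi)=\doubleE\sbrinline{J_\Psi(\psi^\pi)}$, and then $J_\Pi(\pi^*)=\max_{\psi\in\Psi}J_\Psi(\psi)$ forces the (now random) $\psi^{\pi^*}$ to be an optimal path almost surely; adding that one sentence would cover the statement in full generality.
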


\cref{prop:reduction} implies that offline PL can be solved by reduction to offline RL.
Furthermore, the optimal state value $V^*(s)$
is equivalent to the yield in the best-case scenario with the path prefixed by the given sequence $s\in\Acal^*$,
\begin{align}
    V^*(s)
    &=\max_{b\in \Acal^*} J_\Psi(s\oplus b),
    \label{eq:value_interpretation}
\end{align}
and the advantage $A^*(s, a)$ is equivalent to the drawdown of the best-case yield
due to appending action $a$ to $s$,
\begin{align}
    A^*(s,a)
    &=
    \max_{b\in \Acal^*} J_\Psi(s\oplus a\oplus b)
    -\max_{b\in \Acal^*} J_\Psi(s\oplus b)
    \label{eq:advantage_interpretation}
\end{align}
for all $s\not\in\Psi$.
Therefore, the reduction~\eqref{eq:reduction} is useful not just for finding the optimal paths,
but also for gaining insights on the contribution of each action in a path.

\section{Trajectory Advantage Regression (TAR)}
\label{sec:method}

To solve the offline-RL instance given by the PL-to-RL reduction (\cref{prop:reduction}),
we propose the method of \emph{trajectory advantage regression (TAR)}.
TAR is designed to exploit a problem structure in the reduced MDP.
Specifically, we decompose the optimal V-function in terms of the advantage function.
\begin{lemma}[Advantage decomposition]
    \label{lem:advantage_decomposition}
    Consider the reduced MDP~\eqref{eq:reduction}.
    Then,
    \begin{align}
        V^*(a^t)
        &=
        \left\{\begin{array}{cc}
                J^* + \sum_{k=1}^{t} A^*(a^{k-1}, a_k) & (\text{$a^t$ is proper}) \\
                0 & (\text{otherwise})
        \end{array}\right.,
        \label{eq:advantage_decomposition}
    \end{align}
    where
    $J^*\coloneqq \max_{\pi\in \Pi} J_\Pi(\pi)$ is the optimal policy value.
\end{lemma}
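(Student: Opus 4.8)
The plan is to split on the two cases in \eqref{eq:advantage_decomposition} and to reduce the ``proper'' case to a telescoping sum of one-step advantages.

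First I would dispose of the ``otherwise'' branch. A sequence $a^t$ that is not proper contains the terminator $\bot$ at some position strictly before its end, and appending any action preserves this feature; hence every state reachable from $a^t$ is again non-proper and in particular lies outside $\Psi$, since elements of $\Psi$ carry $\bot$ only as their final symbol. Consequently the reward distribution along any episode started at $a^t$ is $\delta_0$ at every step, so $V^\pi(a^t)=0$ for all $\pi$ and therefore $V^*(a^t)=0$. (Equivalently, this can be read off \eqref{eq:value_interpretation}, as no extension $a^t\oplus b$ can belong to $\Psi$.)

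For the proper case the key is the one-step relation $Q^*(s,a)=\Tcal V^*(s,a)$. I would get it from the deterministic optimal policy $\pi^*$: since $Q^{\pi^*}=\Tcal V^{\pi^*}$ by definition and $V^{\pi^*}=V^*$, $Q^{\pi^*}=Q^*$, we obtain $Q^*=\Tcal V^*$. Because $P_+^\oplus$ is deterministic with $s'=s\oplus a$, this unfolds to $Q^*(s,a)=\doubleE_{r\sim P_r^Y(s,a)}\sbr{r}+V^*(s\oplus a)$. I would also record that $J^*=V^*(\emptyset)$: the initial distribution is $P_1^\oplus=\delta_{\emptyset}$, so $J_\Pi(\pi)=V^\pi(\emptyset)$ by \eqref{eq:policy_value}, and maximizing over $\pi$ gives $J^*=V^*(\emptyset)=V^*(a^0)$, where $a^0=\emptyset$. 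Now fix a proper $a^t$. Every prefix $a^0,a^1,\dots,a^{t-1}$ is $\bot$-free and hence lies outside $\Psi$, so $P_r^Y(a^{k-1},a_k)=\delta_0$ and the one-step relation collapses to $Q^*(a^{k-1},a_k)=V^*(a^k)$ for each $k=1,\dots,t$. Therefore $A^*(a^{k-1},a_k)=Q^*(a^{k-1},a_k)-V^*(a^{k-1})=V^*(a^k)-V^*(a^{k-1})$, and summing telescopes:
\begin{align*}
    \sum_{k=1}^{t} A^*(a^{k-1},a_k)
    = \sum_{k=1}^{t}\rbr{V^*(a^k)-V^*(a^{k-1})}
    = V^*(a^t)-V^*(a^0)
    = V^*(a^t)-J^*,
\end{align*}
which rearranges to the claimed identity. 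I would stress that we never require the reward at $a^t$ itself to vanish, only at the prefixes $a^0,\dots,a^{t-1}$, so a single argument covers both $\bot$-free and $\bot$-terminated proper sequences uniformly (in particular feasible $a^t\in\Psi$, for which $V^*(a^t)=J_\Psi(a^t)$).

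The only genuinely delicate points are justifying the one-step Bellman identity $Q^*=\Tcal V^*$ and the interchange of the infinite sum with the expectation used in the non-proper case; both rest on the standing assumption that the policy value in \eqref{eq:policy_value} is well-defined together with the boundedness of yields in $[0,1]$. The combinatorial bookkeeping that properness is inherited by prefixes and destroyed by a misplaced $\bot$ is routine, but I expect it to be the step most in need of a careful statement, since the telescoping is valid precisely because \emph{every} state appearing as $a^{k-1}$ earns zero expected reward.
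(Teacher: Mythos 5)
Your proof is correct, and it reaches the identity by the same underlying telescoping mechanism as the paper: both arguments come down to showing $A^*(a^{k-1},a_k)=V^*(a^k)-V^*(a^{k-1})$ along the prefix chain, summing, and invoking $V^*(\emptyset)=J^*$. The difference is in how that one-step increment is justified. The paper obtains it by summing its interpretation formulas \eqref{eq:value_interpretation} and \eqref{eq:advantage_interpretation}, which are themselves asserted in \cref{sec:pl2rl} without proof; you instead re-derive it from first principles inside the reduced MDP: $Q^*=\Tcal V^*$ via the deterministic optimal policy, the deterministic append transition $s'=s\oplus a$, and the observation that every strict prefix $a^{k-1}$ of a proper sequence is $\bot$-free, hence outside $\Psi$, hence earns zero expected reward. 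You likewise dispose of the improper branch directly (all states reachable from an improper state remain outside $\Psi$, so every policy's value vanishes) rather than by citing \eqref{eq:value_interpretation}. What your route buys is self-containedness and rigor at the step the paper glosses over: \eqref{eq:advantage_interpretation} is stated only for $s\notin\Psi$, so the summation over $k=1,\dots,t$ silently requires checking $a^{k-1}\notin\Psi$ for each $k$, which is exactly the prefix bookkeeping you make explicit. What the paper's route buys is brevity, at the cost of resting on the unproven identities \eqref{eq:value_interpretation}--\eqref{eq:advantage_interpretation}. (Minor note: the paper's $V^*(a^{t'})$ in its proof is a typo for $V^*(a^t)$; your version has no such slip.)
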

\begin{proof}
    Since the improper case is trivial from \cref{eq:value_interpretation},
    we prove the case when $a^t$ is proper.
    Summing \cref{eq:advantage_interpretation}
    with $s=a^{k-1}$ and $a=a_k$ for $1\le k\le t$
    and plugging \cref{eq:value_interpretation},
    we have
    $V^*(a^{t'})= V^*(\emptyset) + \sum_{k=1}^{t'} A^*(a^{k-1}, a_k)$.
    The proof is thus concluded with the identity $V^*(\emptyset)=J^*$.
\end{proof}

Consider the parametrization induced by \cref{lem:advantage_decomposition} of estimates for the optimal V-function,
\begin{align}
    V_\theta(a^t)\coloneqq 
    \left\{\begin{array}{cc}
            c_\theta+\sum_{k=1}^{t} A_\theta(a^{k-1},a_k) & (\text{$a^t$ is proper}) \\
            0 & (\text{otherwise})
    \end{array}\right.,
    \label{eq:tar_parametrization}
\end{align}
where $c_\theta\in\doubleR$ and $A_\theta:\Scal\times\Acal\to \doubleR_{\le 0}$
are estimates for $J^*$ and $A^*$, respectively.

Estimating $V^*$ through \cref{eq:tar_parametrization} is preferred to directly estimating $V^*$ itself for several reasons.
First, from a explainability perspective,
individual advantages $A^*(s,a)$ are more interpretable than the value $V^*(s)$
as it quantifies an effect of choosing action $a$ at state $s$, as discussed in \cref{sec:pl2rl}.
In other words,
\cref{eq:tar_parametrization} can be used to explain the predicted yield $V_\theta(a^T)$
of complete paths $a^T$
in terms of the optimal expected yield $c_\theta$ and the drawdowns $A_\theta(a^{k-1},a_k)$ introduced by the individual actions $a_k$.
Such high explainability also results in the ease of modeling,
since one can design a function approximator directly based on 
the end-user's hypothetical expectations on the form of $A^*(s,a)$.

Second, from a computational perspective,
adopting \cref{eq:tar_parametrization} leads to a significant simplification of the objective function.
Concretely, we propose the following objective function for \cref{eq:tar_parametrization},
\begin{align}
    \Lcal_{\mathrm{TAR}}(V_\theta)
    &\coloneqq 
    \doubleE_{s\sim P_0}\sbr{V_\theta(s)}
    +\frac{\lambda}{2} \doubleE_{(\psi,y)\sim P_{\Psi Y}}\sbr{y-V_\theta(\psi)}^2,&V_\theta \ge 0,
    \label{eq:tar_loss}
\end{align}
which is just a least-squares regression problem
and much easier to optimize than \cref{eq:penalized_vlp}.
We refer to the minimization of $\Lcal_{\mathrm{TAR}}(V_\theta)$ as \emph{trajectory advantage regression~(TAR)}.
TAR is justified by the following theorem.

\begin{theorem}
    \label{thm:justify_tar}
    Suppose $P_\Psi$ is supported on $\Psi$.
    Then,
    there exists a distribution $P_1$ supported on the entire state-action space $\Scal\times \Acal$
    such that 
    \begin{align}
        \Lcal_{\mathrm{TAR}}(V_\theta)= \frac{\lambda \sigma^2}{2} + \Lcal(V_\theta) + \frac{\lambda}{2}\norm{\{V_\theta-V^*\}_+}_{2,P_\Psi}^2
        \label{eq:tar_loss_rewritten}
    \end{align}
    where $\norm{f}_{2,P_\Psi}^2\coloneqq\doubleE_{\psi\sim P_\Psi}\sbrinline{f^2(\psi)}$
    and $\sigma^2\coloneqq \doubleE_{\psi\sim P_\Psi}[\mathrm{Var}_{y\sim P_Y(\psi)}(y)]$.
\end{theorem}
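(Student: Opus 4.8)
The plan is to show that $\Lcal_{\mathrm{TAR}}(V_\theta)$ and $\Lcal(V_\theta)$ differ by exactly the constant $\lambda\sigma^2/2$ plus the one-sided term $\tfrac{\lambda}{2}\norm{\{V_\theta-V^*\}_+}_{2,P_\Psi}^2$, provided $P_1$ is chosen well. Both functionals carry the identical first term $\doubleE_{s\sim P_0}\sbr{V_\theta(s)}$, so I would cancel it at the outset and concentrate entirely on matching the regression term of $\Lcal_{\mathrm{TAR}}$ against the Bellman-penalty term of $\Lcal$.

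First I would expand the regression term by a bias--variance decomposition. Writing $y=\rbr{y-J_\Psi(\psi)}+J_\Psi(\psi)$ and using $\doubleE\sbr{y\mid\psi}=J_\Psi(\psi)$, the cross term vanishes and
\begin{align*}
\doubleE_{(\psi,y)\sim P_{\Psi Y}}\sbr{\rbr{y-V_\theta(\psi)}^2}
= \sigma^2 + \doubleE_{\psi\sim P_\Psi}\sbr{\rbr{J_\Psi(\psi)-V_\theta(\psi)}^2}.
\end{align*}
Since $P_\Psi$ is supported on $\Psi$ and from any complete state $\psi\in\Psi$ the process collects one reward of mean $J_\Psi(\psi)$ and then transitions to the improper state $\psi\oplus a$ (value $0$), we have $V^*(\psi)=J_\Psi(\psi)$ on $\Psi$ (this is \cref{eq:value_interpretation} evaluated there, the empty suffix being optimal as every nonempty one leaves $\Psi$). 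Replacing $J_\Psi$ by $V^*$ and splitting the square pointwise via $x^2=\{x\}_+^2+\{-x\}_+^2$ gives
\begin{align*}
\frac{\lambda}{2}\doubleE_{(\psi,y)\sim P_{\Psi Y}}\sbr{\rbr{y-V_\theta(\psi)}^2}
=\frac{\lambda\sigma^2}{2}
+\frac{\lambda}{2}\norm{\{V_\theta-V^*\}_+}_{2,P_\Psi}^2
+\frac{\lambda}{2}\norm{\{V^*-V_\theta\}_+}_{2,P_\Psi}^2.
\end{align*}
The first two summands already match the target, so the problem reduces to realizing the last summand as the Bellman penalty $\lambda\doubleE_{(s,a)\sim P_1}\sbr{\{\Tcal V_\theta(s,a)-V_\theta(s)\}_+^2}$.

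The key structural step, which I expect to be the main obstacle, is to locate where the residual $\{\Tcal V_\theta(s,a)-V_\theta(s)\}_+$ is active for $V_\theta$ of the form \cref{eq:tar_parametrization}. Using $P_+^\oplus(s,a)=\delta_{s\oplus a}$ I would write $\Tcal V_\theta(s,a)=\doubleE[r]+V_\theta(s\oplus a)$ and examine cases: (i) if $s$ is proper and non-complete the reward mean is $0$, and $V_\theta(s\oplus a)-V_\theta(s)$ equals either $A_\theta(s,a)\le 0$ (when $s\oplus a$ is proper) or $-V_\theta(s)\le 0$ (when $s\oplus a$ is improper, using $V_\theta\ge 0$), so the hinge is $0$; (ii) if $s$ is improper, both value terms and the reward mean vanish; (iii) if $s=\psi\in\Psi$, then $\psi$ is $\bot$-terminated so $\psi\oplus a$ is improper and $V_\theta(\psi\oplus a)=0$, while the reward mean is $J_\Psi(\psi)=V^*(\psi)$, giving residual $V^*(\psi)-V_\theta(\psi)$. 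Hence the positive part of the residual is supported exactly on the complete states and equals $\{V^*(\psi)-V_\theta(\psi)\}_+$ there, independently of $a$.

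Finally I would construct $P_1$ to exploit this. Taking $P_1(s,a)=\tfrac12 P_\Psi(s)U_A(a)+\tfrac12 Q(s,a)$, with $Q$ any distribution of full support on $(\Scal\setminus\Psi)\times\Acal$, yields full support on $\Scal\times\Acal$ while keeping the marginal on complete states exactly $\tfrac12 P_\Psi$. By the case analysis the penalty integrand lives only on $\Psi\times\Acal$, so
\begin{align*}
\lambda\doubleE_{(s,a)\sim P_1}\sbr{\{\Tcal V_\theta(s,a)-V_\theta(s)\}_+^2}
=\frac{\lambda}{2}\doubleE_{\psi\sim P_\Psi}\sbr{\{V^*(\psi)-V_\theta(\psi)\}_+^2}
=\frac{\lambda}{2}\norm{\{V^*-V_\theta\}_+}_{2,P_\Psi}^2,
\end{align*}
which is precisely the leftover summand. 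Re-assembling $\Lcal(V_\theta)=\doubleE_{s\sim P_0}\sbr{V_\theta(s)}+\tfrac{\lambda}{2}\norm{\{V^*-V_\theta\}_+}_{2,P_\Psi}^2$ and restoring the common first term gives \cref{eq:tar_loss_rewritten}. The delicate point is routing the extra full-support mass onto non-complete states only, so as not to perturb the marginal on $\Psi$; this relies on $\Scal\setminus\Psi$ being infinite and on $P_\Psi$ charging every element of $\Psi$.
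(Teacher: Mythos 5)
Your proof is correct and takes essentially the same route as the paper's: the same choice of $P_1=\tfrac12\mu^\Psi+\tfrac12 Q$ (the paper's $(\mu^\Psi+\Ptil_1)/2$), the same case analysis showing the Bellman hinge $\{\Tcal V_\theta(s,a)-V_\theta(s)\}_+$ vanishes off $\Psi$ (using $A_\theta\le 0$ and $V_\theta\ge 0$) and equals $\{V^*(\psi)-V_\theta(\psi)\}_+$ on $\Psi$, and the same bias--variance decomposition combined with the split $x^2=\{x\}_+^2+\{-x\}_+^2$. The only difference is the direction of the bookkeeping (you expand $\Lcal_{\mathrm{TAR}}$ toward $\Lcal$, the paper expands $\Lcal$ toward $\Lcal_{\mathrm{TAR}}$), which is immaterial.
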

\begin{proof}
    Let $P_1=(\mu^\Psi+\Ptil_1)/2$ with $\Ptil_1$ being an arbitrary distribution supported on $(\Scal\setminus \Psi)\times \Acal$.
    Then, the penalty term in $\Lcal(V_\theta)$ is decomposed as
    \begin{align*}
        \mathop{\doubleE}_{(s,a)\sim P_1}
        \sbr{\cbr{\Tcal V_\theta(s,a)-V_\theta(s)}_+^2}
        &=
        \frac{
            \mathop{\doubleE}_{(s,a)\sim \mu^\Psi}
            +\mathop{\doubleE}_{(s,a)\sim \Ptil_1}
        }2
        \sbr{\cbr{\Tcal V_\theta(s,a)-V_\theta(s)}_+^2}.
    \end{align*}
    Now, if $(s,a)\sim \mu^\Psi$, we have
    \begin{align*}
        \cbr{\Tcal V_\theta(s,a)-V_\theta(s)}_+^2
        &=
        \cbr{J_\Psi(s)+V_\theta(s\oplus a)-V_\theta(s)}_+^2
        \\
        &=
        \cbr{J_\Psi(s)-V_\theta(s)}_+^2.
        &(\because s\in\Psi)
    \end{align*}
    Moreover, if $(s,a)\sim \Ptil_1$,
    we have
    \begin{align*}
        \cbr{\Tcal V_\theta(s,a)-V_\theta(s)}_+^2
        &=
        \cbr{J_\Psi(s)+V_\theta(s\oplus a)-V_\theta(s)}_+^2
        \\
        &=
        \cbr{V_\theta(s\oplus a)-V_\theta(s)}_+^2
        &(\because s\not\in \Psi)
        \\
        &=
        0.
        &(\because \cref{eq:tar_parametrization}, A_\theta\le 0)
    \end{align*}
    Summing them up and plugging \cref{eq:PLRL_marginal}, we have
    \begin{align*}
        \Lcal(V_\theta)
        &=
        \mathop{\doubleE}_{s\sim P_0}\sbr{V_\theta(s)}+
        \frac{\lambda}2
        \mathop{\doubleE}_{\psi\sim P_\Psi}
        \sbr{\cbr{J_\Psi(\psi)-V_\theta(\psi)}_+^2}
    \end{align*}
    and thus, since $J_\Psi(\psi)=V^*(\psi)$,
    \begin{align*}
        \Lcal(V_\theta)-\norm{\{V_\theta-V^*\}_+}_{2,P_\Psi}^2
        &=
        \mathop{\doubleE}_{s\sim P_0}\sbr{V_\theta(s)}+
        \frac{\lambda}2
        \mathop{\doubleE}_{\psi\sim P_\Psi}
        \sbr{J_\Psi(\psi)-V_\theta(\psi)}^2
        \\
        &=
        \mathop{\doubleE}_{s\sim P_0}\sbr{V_\theta(s)}+
        \frac{\lambda}2
        \mathop{\doubleE}_{(\psi,y)\sim P_{\Psi,Y}}
        \sbr{y-V_\theta(\psi)}^2 - \frac{\lambda\sigma^2}{2}
        \\
        &=
        \Lcal_{\mathrm{TAR}}(V_\theta) - \frac{\lambda\sigma^2}{2}.
    \end{align*}
    This completes the proof.
\end{proof}

In other words, $\Lcal_{\mathrm{TAR}}(V_\theta)$
is an upper bound on $\Lcal(V_\theta)$ (ignoring the constant shift $\lambda\sigma^2/2$)
that is tight around $V^*$, meaning it is a good surrogate loss for estimating $V^*$.
As a corollary, we can show that
the optimal parameter sets of penalized V-LP and TAR,
\begin{align}
    \Theta^*
    &\coloneqq \argmin_{\theta:V_\theta\ge 0}\Lcal(V_\theta), &
    \Theta^*_{\mathrm{TAR}}
    &\coloneqq \argmin_{\theta:V_\theta\ge 0} \Lcal_{\mathrm{TAR}}(V_\theta)
\end{align}
are the same, given the parametrization of $V_\theta$ is expressive enough.

\begin{corollary}
    Suppose there exists a parameter $\theta^*$ such that $V_{\theta^*}\in\argmin_{V\ge 0}\Lcal(V)$.
    Then, we have $\Theta^*=\Theta^*_{\mathrm{TAR}}$.
\end{corollary}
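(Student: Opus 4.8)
The plan is to read everything off the identity \eqref{eq:tar_loss_rewritten} established in \cref{thm:justify_tar}. Writing $R(V_\theta)\coloneqq \frac{\lambda}{2}\norm{\{V_\theta-V^*\}_+}_{2,P_\Psi}^2\ge 0$, that identity says $\Lcal_{\mathrm{TAR}}(V_\theta)=\frac{\lambda\sigma^2}{2}+\Lcal(V_\theta)+R(V_\theta)$ for every $\theta$ with $V_\theta\ge 0$, and $R(V_\theta)=0$ exactly when $V_\theta\le V^*$ holds $P_\Psi$-almost everywhere, i.e. on $\Psi$. Since $\Lcal_{\mathrm{TAR}}$ and $\Lcal$ differ only by the constant $\lambda\sigma^2/2$ and the nonnegative term $R$, the two argmin sets will coincide as soon as I can show that $R$ vanishes at every minimizer of $\Lcal$; this vanishing is the one nontrivial ingredient.

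First I would use the hypothesis to pin down the two minima. Because the family $\{V_\theta:V_\theta\ge 0\}$ is contained in $\{V\ge 0\}$ and, by assumption, the unconstrained minimizer $V_{\theta^*}$ of $\Lcal$ already lies in the family, the family minimum equals the function-space minimum $m^*\coloneqq\min_{V\ge 0}\Lcal(V)$. Consequently every $\theta\in\Theta^*$ yields a function-space minimizer $V_\theta\in\argmin_{V\ge 0}\Lcal(V)$, so it suffices to prove $R(V)=0$, equivalently $V\le V^*$ on $\Psi$, for an arbitrary $V\in\argmin_{V\ge 0}\Lcal(V)$.

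The key step, and the part I expect to be the main obstacle, is this ceiling property $V\le V^*$. I would prove it by a truncation argument: set $V'\coloneqq\min(V,V^*)$, which is again nonnegative, and compare $\Lcal(V')$ with $\Lcal(V)$ termwise. The linear objective $\doubleE_{s\sim P_0}\sbr{\cdot}$ can only decrease since $V'\le V$. For the penalty I would use that $\Tcal$ is monotone and that $V^*$ is feasible for the V-LP \eqref{eq:vlp}, so $\Tcal V^*(s,a)-V^*(s)\le 0$: splitting the states into those where $V'=V^*$ and those where $V'=V$, in each case $\{\Tcal V'(s,a)-V'(s)\}_+\le\{\Tcal V(s,a)-V(s)\}_+$ pointwise, so the penalty does not increase under any $P_1$ either. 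Hence $\Lcal(V')\le\Lcal(V)$; since $V$ is a minimizer this is an equality, and because $P_0$ has full support the equality of the $P_0$-terms forces $V=V'\le V^*$ everywhere, in particular on $\Psi$, giving $R(V)=0$.

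Finally I would assemble the two inclusions from \eqref{eq:tar_loss_rewritten}. For $\theta\in\Theta^*$ the previous step gives $R(V_\theta)=0$, so $\Lcal_{\mathrm{TAR}}(V_\theta)=\frac{\lambda\sigma^2}{2}+m^*$, while for any admissible $\theta'$ the nonnegativity of $R$ gives $\Lcal_{\mathrm{TAR}}(V_{\theta'})\ge\frac{\lambda\sigma^2}{2}+m^*$; thus $\theta\in\Theta^*_{\mathrm{TAR}}$ and the TAR minimum equals $\frac{\lambda\sigma^2}{2}+m^*$. Conversely, any $\theta\in\Theta^*_{\mathrm{TAR}}$ attains this value, and since $\Lcal(V_\theta)\ge m^*$ and $R(V_\theta)\ge 0$ both hold, equality forces $\Lcal(V_\theta)=m^*$, i.e. $\theta\in\Theta^*$. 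Hence $\Theta^*=\Theta^*_{\mathrm{TAR}}$.
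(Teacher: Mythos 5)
Your proof is correct and follows essentially the same route as the paper's: both rest on the identity \eqref{eq:tar_loss_rewritten}, the key observation that any minimizer of $\Lcal$ over $\{V\ge 0\}$ must satisfy $V\le V^*$ (shown by truncating at $V^*$), and the two inclusions obtained by dropping or zeroing the nonnegative term $\norm{\{V_\theta-V^*\}_+}_{2,P_\Psi}^2$. The only difference is one of detail: you spell out why truncation cannot increase the penalty (monotonicity of $\Tcal$ plus feasibility of $V^*$ for the V-LP), a step the paper asserts without justification, so your write-up is, if anything, more complete.
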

\begin{proof}
    By the assumption, we have $\Lcal(V_{\theta'})\le \Lcal(V')$ for all $\theta'\in\Theta^*$ and $V'\ge 0$.
    Moreover, we also have $V_{\theta'}\le V^*$ because otherwise the earlier assertion is violated as $\Lcal(\Vtil)<\Lcal(V_{\theta'})$
    with $\Vtil\coloneqq V_{\theta'}-\{V_{\theta'}-V^*\}_+ (\ge 0)$.
    Thus,
    for any $\theta$ such that $V_\theta\ge 0$, we have
    \begin{align}
        \Lcal_{\mathrm{TAR}}(V_{\theta'})
        &= \frac{\lambda \sigma^2}{2} + \Lcal(V_{\theta'})
        &\because \cref{eq:tar_loss_rewritten}\text{ and }V_{\theta'}\le V^*
        \\
        &\le \frac{\lambda \sigma^2}{2} + \Lcal(V_\theta)
        &\because V_{\theta'}\in\argmin_{V\ge 0}\Lcal(V)
        \\
        &\le \Lcal_{\mathrm{TAR}}(V_\theta),
        &\because \cref{eq:tar_loss_rewritten}
    \end{align}
    implying 
    $\theta'\in \Theta_{\mathrm{TAR}}$
    and hence $\Theta^*\subset \Theta^*_{\mathrm{TAR}}$.
    Finally, to prove the inverse,
    pick any $\theta''\in \Theta^*_{\mathrm{TAR}}$
    and observe that
    \begin{align}
        \Lcal(V_{\theta''})
        &\le
        \Lcal_{\mathrm{TAR}}(V_{\theta''})-\frac{\lambda \sigma^2}{2} &\because \cref{eq:tar_loss_rewritten}
        \\
        &\le 
        \Lcal_{\mathrm{TAR}}(V_{\theta^*})-\frac{\lambda \sigma^2}{2} & \because \theta''\in \Theta^*_{\mathrm{TAR}}
        \\
        &=
        \Lcal(V_{\theta^*}). &\because V_{\theta^*}\le V^*
    \end{align}
    This implies $\theta''\in \Theta^*$
    and hence $\Theta^*_{\mathrm{TAR}}\subset \Theta^*$.
\end{proof}

\section{Related Work}
\label{sec:related_work}
Applications of path regression and planning can be found abundantly in the literature of traffic modeling and route optimization~\cite{nikolova2008route,chen2024deep}.
In this context, one of the key aspects differentiating our method
is that the planning problem is embedded in the regression problem through the decomposition~(\cref{eq:advantage_decomposition,eq:tar_parametrization}),
whereas previously these two problems are handled separately or unified as an RL problem.
To illustrate this, let us compare our method to a conventional path regression methods, RETRACE~\citep{ide2011trajectory}.
Both TAR and RETRACE solve regression problems and decompose their predictions into individual contributions of path elements,
history-dependent terms $A_\theta(a^{k-1}, a_k)$ in TAR and 
edge-dependent terms $\Atil_\theta(a_{k-1}, a_k)$ in RETRACE.
However,
TAR's decomposition is readily usable for route planning (by maximizing $A_\theta(a^{k-1}, a_k)$ with respect to $a_k$ sequentially)
thanks to its interpretation as the advantage function,
whereas one needs to solve an additional planning problem to utilize RETRACE's decomposition
for finding the optimal routes.

\section{Conclusion}
\label{sec:conclusion}
We have studied the problem of offline path learning (PL)
through the lens of offline reinforcement learning (RL).
In particular, we have seen that
the connection between offline PL and offline RL
leads to a specific parametrization of the optimal value function.
The proposed parametrization is useful for interpreting the result of PL, decomposing the predicted yield into the contributions of individual elements in the path,
while avoiding the computational cost of solving full-fledged RL.

Practical implementation of TAR as well as its empirical evaluation
is left open as one of the most important future work.
Besides, it is interesting to see if regression-based formulations like TAR can be derived on top of other RL formalism other than penalized V-LP.

\bibliographystyle{alpha}
\bibliography{reference}

\end{document}